\DeclareMathOperator*{\argmax}{arg\,max}
\newcommand{\redcircle}{\raisebox{-0.8pt}{\tikz{\draw[color=DarkRed, fill=red,  thick](1mm,0) circle (1.2mm);}}}
\newcommand{\greencircle}{\raisebox{-0.8pt}{\tikz{\draw[color=DarkGreen, fill=LightGreen,  thick](1mm,0) circle (1.2mm);}}}
\definecolor{myorange}{rgb}{0.9,0.6,0}
\definecolor{myblue}{rgb}{0.35,0.7,0.9}
\definecolor{mygreen}{rgb}{0,0.6,0.5}
\definecolor{myred}{rgb}{0.8,0.4,0}
\newcommand{\blackline}{\raisebox{2pt}{\tikz{\draw[-,black,solid,line width = 1.5pt](0,0) -- (4mm,0);}}}
\newcommand{\orangeline}{\raisebox{2pt}{\tikz{\draw[-,myorange,solid,line width = 1.5pt](0,0) -- (4mm,0);}}}
\newcommand{\blueline}{\raisebox{2pt}{\tikz{\draw[-,myblue,solid,line width = 1.5pt](0,0) -- (4mm,0);}}}
\newcommand{\greenline}{\raisebox{2pt}{\tikz{\draw[-,mygreen,solid,line width = 1.5pt](0,0) -- (4mm,0);}}}
\newcommand{\redline}{\raisebox{2pt}{\tikz{\draw[-,myred,solid,line width = 1.5pt](0,0) -- (4mm,0);}}}
\begin{document}

\title{Memoryless Policy Iteration for Episodic POMDPs}

\author{\name Roy van Zuijlen \email r.a.c.zuijlen@tue.nl \\
       \name Duarte Antunes \email d.antunes@tue.nl \\
       \addr Control Systems Technology Section, Department of Mechanical Engineering\\
       Eindhoven University of Technology\\
       Eindhoven, The Netherlands}

\maketitle

\begin{abstract}

Memoryless and finite-memory policies offer a practical alternative for solving partially observable Markov decision processes (POMDPs), as they operate directly in the output space rather than in the high-dimensional belief space. However, extending classical methods such as policy iteration to this setting remains difficult; the output process is non-Markovian, making policy-improvement steps interdependent across stages. We introduce a new family of monotonically improving policy-iteration algorithms that alternate between single-stage output-based policy improvements and policy evaluations according to a prescribed periodic pattern. We show that this family admits optimal patterns that maximize a natural computational-efficiency index, and we identify the simplest pattern with minimal period. Building on this structure, we further develop a model-free variant that estimates values from data and learns memoryless policies directly. Across several POMDPs examples, our method achieves significant computational speedups over policy-gradient baselines and recent specialized algorithms in both model-based and model-free settings.

\end{abstract}

\begin{keywords}
  Memoryless policies, non-Markov systems, policy iteration, POMDPs, episodic reinforcement learning
\end{keywords}

\section{Introduction}

\par Finding an optimal policy for a partially observable Markov decision problem (POMDP) is PSPACE-complete~\cite{tsitsiklis:87}. In the episodic case, the optimal value function over the belief simplex is piecewise linear but its complexity grows exponentially with the horizon~\cite{sondik:73}. As a result, a large body of work has focused on suboptimal but practical solution methods; see the survey papers~\cite{spaan2012partially},~\cite{surveyPOMDPs2}. A prominent class of such methods restricts the policy search to memoryless or finite-memory policies that depend on a short window of past output. However, once cast in the output space of observations, this system becomes non-Markovian, and computing an optimal memoryless policy is itself an NP-hard problem~\cite{littman1994memoryless},~\cite{vlassis:12}.
\par Approximate model-based and model-free techniques have therefore been explored to compute memoryless policies, long regarded as an open problem~\cite{AzizzadenesheliLazaricAnandkumar2016_OpenProblem_MemorylessPOMDPs}. Since finite-memory policies can be represented as memoryless policies through state augmentation~\cite{Littman:1993} most approaches below, and the methods proposed in this paper, apply to both settings.  
\par In the context of model-based approaches,~\cite{cohen2023future} propose a rolling-horizon method for computing memoryless policies using mixed-integer linear programming (MILP). Across a large benchmark of POMDPs, their MILP-based policy is within 20\% of the optimal POMDP value in 60$\%$ of the problems. ~\cite{MuellerMontufar2022} prove that the state-action frequencies and the expected cumulative rewards are rational functions of the policy and use linear optimization subject to polynomial constraints to find a memoryless policy.  \cite{SteckelmacherRoijersHarutyunyan2018_MemorylessOptionsPOMDPs} show that finite-state controllers, which include memoryless policies, can be expressed with options via the so called option-observation initiation sets method, which simplifies the search for memoryless policies. More recently,~\cite{indian:24} study problems modelled with the so-called agent state (a generally non-Markovian construct that subsumes finite output windows) and show that periodic time-varying policies can outperform stationary ones.
\par In the context of model-free methods,~\cite{Azizzadenesheli2016ReinforcementLO} propose a reinforcement learning (RL) method that relies on estimating the parameters of a POMDP using spectral methods, an oracle to compute an optimal memoryless policy, and a smart exploration strategy. Related to this approach,~\cite{neurips:2020} propose a sample-efficient RL method for episodic undercomplete POMDPs, where the number of observations exceeds the number of states; data is used to estimate a model and efficiently guide exploration. \cite{21Mahajan22} present policy gradient based online RL algorithms for POMDPs which learn an agent information state representation using multi-scale stochastic gradient descent. \cite{SinhaMahajan2024_AgentStateBasedPolicies} discuss how ideas from approximate information state for computing agent-based (and memoryless) policies can be used to improve Q-Learning and actor-critic algorithms for POMDPs.~\cite{Perkins:2002} proposes a Q-learning type algorithm relying on stochastic optimization. Systems with non-Markovian rewards are considered in~\cite{13camacho17}, in which a reward shaping approach is proposed to handle temporal logic specifications. 
Policy gradient methods, which rely on parametrizations of stochastic policies and optimize its parameters via gradient estimates, have also been proposed in the context of memoryless policies~\cite{15Jaakkola94},~\cite{BaxterBartlett2000_POMDPsDirectGradient}.  Several recent papers propose deep RL approaches for POMDP~\cite{cai2024provable},~\cite{ishida2024soap},~\cite{arcieri2024pomdp},~\cite{rozek2024partially}. 
\par A central challenge in all these approaches is the lack of the Markov property in the output space, which prevents direct application of standard MDP planning and RL techniques such as, most notably, policy iteration. Only a few papers attempt to extend policy iteration to the memoryless setting.~\cite{LiYinXi2011_MemorylessPOMDPs_AverageReward} propose a policy-iteration scheme for the average-reward case, but each iteration requires an exhaustive search over the policy space, which is computationally prohibitive except for very small problems or heavily restricted policy classes. Memoryless policies are special cases of finite-state controllers, for which policy iteration has been proposed~\cite{Hansen1997_ImprovedPolicyIterationPOMDPs,hansen2}. However, the algorithms in~\cite{Hansen1997_ImprovedPolicyIterationPOMDPs,hansen2} work explicitly in the belief space and they do not scale up well with respect to the size of the problem~\cite{FSC:99}. In the related setting of general non-Markovian systems,~\cite{15Jaakkola94} propose an average-cost algorithm reminiscent of policy iteration, but their policy improvement step performs a gradient update over the probabilities of a randomized policy, based on cost estimates from policy evaluation. In contrast, we focus on deterministic memoryless policies and depart from gradient-based approaches, which are fundamentally different from policy iteration and, as we shall see, substantially less computationally efficient for the problems we target.  
\par Stochastic memoryless policies have been reported to outperform deterministic ones in several non-Markovian contexts~\cite{barto1983neuronlike},~\cite{loch1998using},~\cite{williams1998experimental},~\cite{krishnamurthy2016contextual}. However, the empirical results in~\cite{cohen2023future} question the generality of this claim, and~\cite{Bertsekas:08} shows that deterministic finite-history policies can approximate the optimal deterministic POMDP policy arbitrarily well. In our view, the structure of the output space plays a crucial role in comparing deterministic and stochastic policies. Deterministic policies, on which we focus in this paper, are particularly appealing in applications where predictability and safety are essential.

In this paper, we introduce a new family of policy-iteration algorithms for computing deterministic memoryless policies in episodic POMDPs. Each algorithm in this family alternates between single-stage, output-based policy improvements and policy-evaluation steps following a prescribed periodic pattern that determines which stages are improved and when. The non-Markovian nature of the output process introduces two key departures from classical MDP policy iteration.
First, policy evaluation must compute not only the state–action Q-values but also the state-distribution trajectory, since both quantities influence output-based decisions.
Second, policy improvements cannot be applied simultaneously across all stages: improvements at one stage affect the value of subsequent stages, making stagewise improvements inherently interdependent.

We analyse how the choice of periodic pattern influences a natural index of computational efficiency in the time-invariant setting, and identify the patterns that maximize this index. Among them, we advocate the simplest optimal pattern, namely the one with minimal period. Under mild conditions, we show that every algorithm in this family converges monotonically to a locally optimal memoryless policy. Building on this structure, we further introduce a model-free variant that learns memoryless policies directly from data. Experiments demonstrate substantial computational gains over policy-gradient baselines and recent specialized methods, in both model-based and model-free settings.

\par The remainder of the paper is organized as follows. Section~\ref{sec:II} formulates the problem. Section~\ref{sec:III} proposes the general class of algorithms and presents the main results. Section~\ref{sec:IV} tackles model-free methods. Section~\ref{sec:V} provides numerical examples. Section~\ref{sec:VI} discusses future work and provides final remarks. 

\par \textit{Notation}: The number of elements in set $\mathcal{A}$ is denoted by $|\mathcal{A}|$. $\|\cdot\|^2_\Lambda$ denotes the weighted squared norm: $\|x\|^2_\Lambda = x^\top \Lambda x$. The indicator function $\mathbbm{1}\lbrace E \rbrace$ equals 1 if the statement $E$ is true and 0 otherwise.

\section{Problem formulation}\label{sec:II}
Consider a standard Partially Observable Markov Decision Process. The states and actions at time $t \in \mathcal{T} := \lbrace 0,1,...,T\rbrace$, $T \in \mathbb{N}$, are denoted by $S_t \in \mathcal{S}$ and $A_t \in \mathcal{A}$, where $\mathcal{S}$ and $\mathcal{A}$ are finite sets. The dynamics are characterized by
\begin{equation*}
    p_t(s^\prime|s,a) = \text{Pr}\lbrace S_{t+1} = s^\prime \mid S_{t} = s, A_t = a \rbrace.
\end{equation*}
 The initial state is assumed to be distributed according to a known probability distribution $\mu_0(s) = \text{Pr}\lbrace S_0 = s \rbrace$. Observations at time $t$ are denoted by $O_t \in \mathcal{O}$, where $\mathcal{O}$ is a finite set. The observation model at time $t$ is given by
\begin{equation*}
    q_t(o|s) = \text{Pr}\lbrace O_t = o \mid S_{t} = s \rbrace.
\end{equation*} 
\par A finite time horizon (episodic task) is considered.
The reward at time $t$ is denoted by $R_t$ and the (stage-dependent) expected reward is a function of the current state and action,
\begin{equation*}
    r_t(s,a) = \mathbb{E}\left\lbrack R_t | S_t = s, A_t = a \right\rbrack.
\end{equation*}
\par For POMDPs the optimal policy is in general a function of the complete history of observations and actions $H_t := \lbrace A_0, O_0, ..., A_{t-1}, O_{t} \rbrace$, but it can also be written as a function of the belief state  $b_t(s) = \text{Pr}\lbrace S_t = s | H_t \rbrace$~\cite{sondik:73}.
History dependent policies scale poorly with time and belief state dependent policies scale poorly with the state dimension $n$, making both approaches computationally impractical. Motivated by this, in this paper, we focus instead on finding (suboptimal) time-varying policies $\lbrace {\pi}_t : \mathcal{O} \rightarrow \mathcal{A}\rbrace$ that only depend on the current observation
\begin{equation*}
    {\pi}_t(a|o) = \text{Pr}\lbrace A_t = a | O_t = o \rbrace.
\end{equation*}
With some abuse of notation, we use ${\pi}_t:\mathcal{O}\rightarrow \mathcal{A}$, where $\pi_t(o)$ is an action, to denote deterministic observation-based policies and probability ${\pi}_t(a|o) \in [0,1]$ for their equivalent degenerate probabilistic representation
    \begin{equation}\label{eq:pidet}
        {\pi}_t(a|o) := \mathbbm{1}\lbrace a = {\pi}_t(o)\rbrace.
    \end{equation}
 More specifically, we are interested in \textit{deterministic} policies: $\lbrace {\pi}_t : \mathcal{O} \rightarrow \mathcal{A} \rbrace_{t=0}^{T-1}$ that maximize the expected episodic return
\begin{equation}\label{eq:L}
L^{\pi}= \mathbb{E}\left \lbrack \sum_{t=0}^{T-1} R_t + V_T(s_T) \right\rbrack.
\end{equation}
where $V_T(s_T) $ is the terminal cost. These policies scale well, are simple to implement, and give predictable behaviour. 
In the next section, we propose a model-based policy iteration scheme, that monotonically improves an initial deterministic observation-based policy to a (local) optimum.

\section{Model-Based Policy Iteration for Memoryless Policies}\label{sec:III}
 
A general class of memoryless policy iteration algorithms for episodic POMDPs is presented in Section~\ref{sec:III-A} where it is shown that they converge monotonically.  A proposed computationally efficient instance of this class is presented in Section~\ref{sec:III-B}.  

\subsection{Memoryless Policy Iteration with Periodic Stage Updates}\label{sec:III-A}

\par Consider a memoryless policy ${\pi}_t(a|o)$, let  $Q^\pi_T(s,a) =  V_T(s)$ and let the state-action value at time $t$ be
\begin{equation}\label{eq:Q_eval_obs_based}
    Q^{{\pi}}_t(s,a) = r_t(s,a) + \sum_{s^\prime} p_t(s^\prime \mid s,a) \Big(\sum_{o^\prime} q_{t+1}(o^\prime \mid s^\prime) \Big(\sum_{a^\prime} {\pi}_{t+1}(a^\prime \mid o^\prime) Q^{{\pi}}_{t+1}(s^\prime,a^\prime)\Big)\Big)
\end{equation}
where the inner summation boils down to $Q^{{\pi}}_{t+1}(s^\prime,\pi_t(o))$ for deterministic policies~\eqref{eq:pidet}. These state-action values can be computed backward in time from $t=T-1$ until $t=0$. Moreover, let
 the observation-action value be
\begin{equation}\label{eq:Q_function_obs_based}
    \Bar{Q}^{{\pi}}_t(o,a) = \sum_s Q^{{\pi}}_t(s,a) \alpha_{t}(s\mid o)
\end{equation}
where $\alpha_t(s|o) = \text{Pr}\lbrace S_t = s | O_t = o \rbrace$ is the posterior distribution of the state conditioned on observation $o$. Using Bayes' rule:
\begin{equation}\label{eq:alpha_posterior}
    \alpha_t(s|o) = \frac{\text{Pr}\lbrace O_t = o | S_t = s \rbrace \mu_t(s)}{\gamma_t(o)}
\end{equation}
with $ \gamma_t(o) = \text{Pr}\lbrace O_t = o \rbrace=\sum_s \text{Pr}\lbrace O_t = o | S_t = s \rbrace \mu_t(s),$ 
and prior $\mu_t(s) = \text{Pr}\lbrace S_t = s \rbrace$. The state distribution evolves under $\pi$ as
\begin{equation}\label{eq:mu} \mu_{t+1}(s) = \sum_{s^\prime} \Big(\sum_{o^\prime} \Big(    \sum_{a^\prime}{p_t(s|s^\prime,a')\pi}_{t}(a^\prime|o^\prime)\Big)q_t(o^\prime|s^\prime)\Big)\mu_t(s^\prime)\end{equation}
given the initial state distribution $\mu_0(s)$, where the inner summation boils down to $p_{t}(s|s^\prime,\pi_t(o^\prime))$ for deterministic policies~\eqref{eq:pidet}.
\par The proposed class of policy iteration algorithm is parametrized by an arbitrary periodic sequence $\tau_\ell\in\mathcal{T}$, $\ell \in \{0,1,\dots,\}$ with arbitrary period $M$. 
\begin{enumerate}
  \item[] \hspace{-0.8cm} Given a deterministic initial policy $\pi^0$ and an $M$-periodic schedule ${\tau_\ell}$, set $\ell=0$.
    \item[1.] \textbf{Policy evaluation:} Compute $\bar{Q}^{\pi^\ell}_t(o,a)$ for  $t =\tau_\ell$.
    \item[2.] \textbf{Policy improvement:} Compute
    \begin{equation*}
{\pi}^{\ell+1}_t(o) = \argmax_a \bar{Q}_t^{{\pi}^{\ell}}(o,a) \text{ for  }t=\tau_\ell.
    \end{equation*}
and set ${\pi}^{\ell+1}_t(o)={\pi}^{\ell}_t(o)$ for $t\neq \tau_\ell$.  Set $\ell \rightarrow \ell+1$ and repeat 1 and $2$ until the policy does not change over $M$ consecutive policy improvement steps.
\end{enumerate}
\par We call this algorithm for a fixed sequence $\{\tau_\ell\}$ memoryless policy iteration. Note that to compute $\bar{Q}^{\pi^\ell}_t(o,a)$  in step 1 for a new policy obtained in step 2 one needs to compute the state-action value and the state distribution at time $t$. This can be done with a forward pass of~\eqref{eq:mu} from $t=0$ until $t=\tau_\ell$ and a backward pass of~\eqref{eq:Q_eval_obs_based} from $t=T-1$ until $t=\tau_\ell$. 
\par In fully observable MDPs \big($\mathcal{O}=\mathcal{S}$, $q_t(o|s)=\mathbbm{1}\{o=s\}$\big), one has
\[
\alpha_t(s|o)=\mathbbm{1}\{s=o\},\qquad
\bar{Q}_t^\pi(o,a)=Q_t^\pi(o,a),
\]
and $\mu_t$ plays no role in policy improvement. Therefore, standard policy iteration updates all stages simultaneously.
In contrast, under partial observability, a change in policy at stage $t=\tau_\ell$ propagates through $\mu_t$, altering $\bar{Q}_{t^\prime}^\pi(o,a)$ at other stages. Thus, the observation-action values of interest (namely that for stage $\tau_{\ell+1}$) need to be updated immediately after a stage policy improvement is carried out, otherwise the policy improvement step at stage $\tau_{\ell+1}$ does not necessarily lead to a cost improvement.


 We assume that the periodic sequence ${\tau_\ell}$ is onto, ensuring that every stage is eventually updated. Moreover, we assume that $\tau_{\ell+1}\neq \tau_{\ell}$ as two consecutive policy improvements at the same stage would be redundant.  Under these conditions, analogously to classical policy iteration, we can guarantee monotonic improvement of the expected episodic return and convergence to a fixed policy. Unlike the fully observable MDP setting, however, the limiting policy need not be globally optimal due to the intrinsic non-Markovian structure of the output process. We say that a memoryless policy is locally optimal if no single-stage, observation-based deterministic deviation can further improve the expected initial-stage value. The formal statement is provided next and the proof is provided in the appendix.

\par 

\begin{theorem}\label{th:monotonic_convergence}
    Let $\pi^\ell$ be the policy produced by memoryless policy iteration under an arbitrary periodic onto sequence ${\tau_\ell}$ satisfying $\tau_{\ell+1}\neq \tau_\ell$, and denote its expected episodic return by $L^{\pi^\ell}$ as in~\eqref{eq:L}. Then
    $$L^{\pi^{\ell+1}} \geq L^{\pi^{\ell}}\text{ for all } \ell \in \{0,1,2,\dots\}$$ 
    and the algorithm terminates at a locally optimal memoryless policy.
\end{theorem}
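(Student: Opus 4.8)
The plan is to sidestep the state-pointwise comparison that drives classical policy iteration, which is unavailable here because the improvement step maximizes the posterior-averaged $\bar Q_\sigma$ over actions observation-by-observation rather than the raw $Q_\sigma$ state-by-state. Writing $\sigma=\tau_\ell$ for the updated stage, I would first record the dependence structure implied by~\eqref{eq:Q_eval_obs_based} and~\eqref{eq:mu}: $Q^\pi_t$ depends only on the policy at stages strictly greater than $t$, whereas $\mu_t$ depends only on the policy at stages strictly smaller than $t$. Hence changing $\pi_\sigma$ alone leaves $Q^\pi_\sigma$, $\mu_\sigma$, $\alpha_\sigma$ and $\gamma_\sigma$ all unchanged, which in particular makes the \argmax over $\bar Q^{\pi^\ell}_\sigma(o,\cdot)$ well posed.

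The crux is a forward--backward decomposition of the return at the cut $\sigma$. Using the Markov property of the state process and the interpretation $Q^\pi_\sigma(s,a)=\mathbb{E}[\sum_{t=\sigma}^{T-1}R_t+V_T(S_T)\mid S_\sigma=s,A_\sigma=a]$, I would write
\[
L^\pi = \mathbb{E}\Big[\sum_{t=0}^{\sigma-1}R_t\Big] + \sum_s \mu_\sigma(s)\,W^\pi_\sigma(s),\qquad W^\pi_\sigma(s):=\sum_o q_\sigma(o\mid s)\sum_a \pi_\sigma(a\mid o)\,Q^\pi_\sigma(s,a).
\]
The first term and the weights $\mu_\sigma$ depend only on $\pi_0,\dots,\pi_{\sigma-1}$, so they are identical for $\pi^\ell$ and $\pi^{\ell+1}$, and $Q^{\pi^{\ell+1}}_\sigma=Q^{\pi^\ell}_\sigma$ by the dependence structure; thus the entire change in $L$ lives in the outer $\pi_\sigma$ of $W_\sigma$. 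Converting the $\mu_\sigma$-weighted state sum into an observation sum through the Bayes identity $\gamma_\sigma(o)\alpha_\sigma(s\mid o)=q_\sigma(o\mid s)\mu_\sigma(s)$ of~\eqref{eq:alpha_posterior}, together with~\eqref{eq:pidet} and~\eqref{eq:Q_function_obs_based}, yields
\[
L^{\pi^{\ell+1}}-L^{\pi^\ell}=\sum_o \gamma_\sigma(o)\big(\bar Q^{\pi^\ell}_\sigma(o,\pi^{\ell+1}_\sigma(o))-\bar Q^{\pi^\ell}_\sigma(o,\pi^\ell_\sigma(o))\big)\ge 0,
\]
since each bracket is nonnegative by the defining \argmax and $\gamma_\sigma(o)\ge 0$. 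This establishes monotonicity.

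For termination and local optimality I would work over the finite set of deterministic memoryless policies. Under the mild conditions that \argmax ties are resolved in favour of the incumbent action and that every observed $\gamma_t(o)$ is positive, any genuine policy change forces a strictly positive term with positive weight above, hence a strict increase of $L^\pi$; as $L^\pi$ takes finitely many values and is nondecreasing, only finitely many changes occur, so eventually the policy stays fixed over $M$ consecutive steps and the algorithm stops. Because $\{\tau_\ell\}$ is onto and $M$-periodic, any such block of $M$ steps revisits every stage, so the terminal policy $\pi^\star$ satisfies $\pi^\star_t(o)\in\argmax_a \bar Q^{\pi^\star}_t(o,a)$ for all $t$ and $o$; reusing the decomposition for an arbitrary single-stage deterministic deviation $\tilde\pi$ at any stage $t$ then gives $L^{\tilde\pi}-L^{\pi^\star}=\sum_o\gamma_t(o)\big(\bar Q^{\pi^\star}_t(o,\tilde\pi_t(o))-\bar Q^{\pi^\star}_t(o,\pi^\star_t(o))\big)\le 0$, which is precisely local optimality. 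The main obstacle is the conceptual one of obtaining improvement from an observation-wise, posterior-averaged maximization; the single-stage decomposition together with the Bayes identity aligning the $\mu_\sigma$- and $\gamma_\sigma$-weightings is exactly what recovers the monotonicity that state-pointwise arguments would otherwise provide.
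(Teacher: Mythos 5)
Your proposal is correct and follows essentially the same route as the paper's own proof: both split the return at the improved stage $\sigma=\tau_\ell$, observe that the prefix rewards and $\mu_\sigma$ depend only on earlier stages while $Q^\pi_\sigma$ depends only on later ones, and use the identity $\gamma_\sigma(o)\alpha_\sigma(s\mid o)=q_\sigma(o\mid s)\mu_\sigma(s)$ to reduce the change in $L^\pi$ to a $\gamma_\sigma$-weighted, observation-wise comparison of $\bar Q^{\pi^\ell}_\sigma$, which the $\argmax$ makes nonnegative. Your version is slightly more explicit about the Bayes conversion and about why the policy (not merely the value) stabilizes via incumbent-favouring tie-breaking, but these are refinements of the same argument rather than a different one.
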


There are many possibilities to choose the periodic sequence $\tau_\ell$, such as forward sweeps:
\begin{equation*}
    \tau = \lbrace 0,1,...,T-1,0,1,...,T-1,... \rbrace
\end{equation*}
or, more in style of Dynamic Programming, backward sweeps:
\begin{equation*}
    \tau = \lbrace T-1,T-2,...,1,0,T-1,T-2,...,1,0,... \rbrace.
\end{equation*}
Each sequence leads, in general, to a distinct \textit{local} optimum.  The backward sweep leads to a policy-iteration algorithm similar to that for MDPs. In fact, as one sweeps backwards only the state-action value needs to be updated and the state distribution update can be omitted as it is trivially obtained for MDPs. However, in POMDPs this is not necessarily the most efficient ordering. Optimized patterns are discussed next.


\subsection{Efficient memoryless policy iteration}\label{sec:III-B}

A key observation for reducing the computational effort of memoryless policy iteration under a given sequence ${\tau_\ell}$ is the following. Suppose a policy improvement is performed at stage $\tau_0$, and the next update occurs at $\tau_1>\tau_0$. Then only the state distributions $\mu_{\tau_0+1},\dots,\mu_{\tau_1}$ must be recomputed, since the change at $\tau_0$ only affects the forward propagation of $\mu_t$. In contrast, the state–action values $Q_t^{\pi^1}(s,a)$ for $t\ge \tau_0$ need not be recalculated until a later improvement is executed, because they can be obtained by backward recursion from~\eqref{eq:Q_eval_obs_based}. Consequently, computing $\bar{Q}_{\tau_1}^{\pi^1}(o,a)$ and performing the improvement at stage $\tau_1$ only requires updating $\mu_{\tau_0+1},\dots,\mu_{\tau_1}$.

A symmetric argument holds when the next improvement occurs at $\tau_1’<\tau_0$. In that case, only the values $Q_t^{\pi^1}$ for $t=\tau_0-1,\dots,\tau_1’$ must be recomputed backward, while the distributions $\mu_t$ for $t\le \tau_0$ remain unaffected, as they depend solely on forward propagation from the initial state.  These findings are summarized and illustrated in Figure~\ref{fig:Figure_mu_Q}.
\par Building on this insight, we define an index that quantifies the number of policy evaluation operations per period required by a  sequence $\{\tau_\ell\}$  in the time-invariant case ($p_t,q_t,r_t$ do not depend on $t$) 
\begin{equation}\label{eq:periodic_cost_C}
    C = \frac{1}{M}{\sum_{\ell=0}^{M-1} {w_\mu \max(\tau_{\ell+1}-\tau_\ell,0)+w_Q \max(0,\tau_{\ell}-\tau_{\ell+1})}}
\end{equation}
where
\begin{itemize}
    \item $w_\mu$ - computational cost of one stage state distribution update~\eqref{eq:mu}.
    \item $w_Q$ - computational cost of one stage state-action update~\eqref{eq:Q_eval_obs_based}.
\end{itemize}
Although $w_\mu$ and $w_Q$ are often comparable (both involve summation over states and observations), distinguishing them enables a more general efficiency analysis. The index $C$ represents the total number  of update operations required to perform $M$ policy improvements - hence smaller $C$ lead to faster learning, as improvements increase performance while evaluations are auxiliary. Note that this performance index assumes that the computational cost of each policy-evaluation step, as well as the expected value improvement of each policy-improvement step, is identical across all stages. The conclusions that follow therefore apply only to problems that (approximately) satisfy these assumptions. They do not extend to settings in which certain stages yield systematically larger value gains, or where policy-evaluation costs vary significantly across stages (e.g., time-varying problems). 
\par The next result identifies the most computationally efficient periodic sequences. 
\begin{figure}[t]
    \centering
    \includegraphics[width=0.8\linewidth]{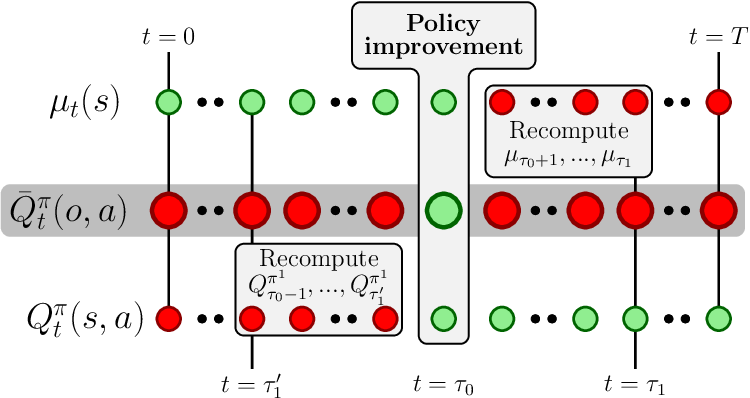}
    \caption{If the policy is improved at $t=\tau_0$, $\bar{Q}^{\pi^1}_t(o,a)$ is outdated for $t < \tau_0$ since $Q^{\pi^1}_t(s,a)$ changes, and for $t > \tau_0$ since $\mu_t(s)$ changes. If the next improvement occurs at $\tau_1$, where $\tau_1 > \tau_0$, only $\mu_{\tau_0 + 1},...,\mu_{\tau_1}$ have to be evaluated. Moreover, if the next improvement occurs at $\tau_1^\prime$, where $\tau_1^\prime < \tau_0$, only $Q^{\pi^1}_{\tau_0 - 1},...,Q^{\pi^1}_{\tau_1^\prime}$ have to be evaluated. Changes in quantities are denoted by \protect\redcircle, whereas unchanged quantities are denoted by \protect\greencircle. } 
    \label{fig:Figure_mu_Q}
\end{figure}

\begin{theorem}\label{th:optimal_sequence}
Among all periodic onto update sequences satisfying $\tau_{\ell+1}\neq \tau_\ell$, those that minimize the computational cost index \eqref{eq:periodic_cost_C} are exactly the sequences for which
$|\tau_{\ell+1}-\tau_\ell| = 1,\quad \text{for every } \ell \in \mathbb{N}$,
for any choice of weights $w_\mu$ and $w_Q$ with $w_\mu + w_Q > 0$.
Within this set of minimizing sequences, the following sequence is the unique one (up to cyclic shifts) with minimal period:
$$\tau = (0,1,\ldots,T\!-\!2,T\!-\!1,T\!-\!2,\ldots,1,0,1,\ldots \,).$$
In other words, the optimal periodic schedule with minimal period consists of a forward sweep $0,1,\dots,T-1$ followed by a backward sweep $T-1,T-2,\dots,0$, repeated indefinitely.
\end{theorem}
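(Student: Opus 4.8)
The plan is to reduce the cost index to a weight-independent quantity and then establish the two claims separately. For the reduction, I would start from $C = \frac{1}{M}\sum_{\ell=0}^{M-1}\bigl[w_\mu\max(\tau_{\ell+1}-\tau_\ell,0)+w_Q\max(\tau_\ell-\tau_{\ell+1},0)\bigr]$ and observe that the two max-terms automatically isolate the forward steps (where $\tau_{\ell+1}>\tau_\ell$) from the backward steps. Let $U=\sum_{\ell}\max(\tau_{\ell+1}-\tau_\ell,0)$ be the total upward displacement over one period and $V=\sum_{\ell}\max(\tau_\ell-\tau_{\ell+1},0)$ the total downward displacement. The key observation is that periodicity forces $U=V$: since $\sum_{\ell=0}^{M-1}(\tau_{\ell+1}-\tau_\ell)=\tau_M-\tau_0=0$ by telescoping together with $\tau_M=\tau_0$, the positive and negative increments cancel. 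Writing $D:=U=V$, the forward terms contribute $w_\mu D$ and the backward terms $w_Q D$, so $C=(w_\mu+w_Q)\,D/M$. Hence, for any weights with $w_\mu+w_Q>0$, minimizing $C$ is equivalent to minimizing $D/M$, and the set of minimizers is independent of the particular split of the weights.

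Next I would prove the first claim. Since $2D=\sum_{\ell=0}^{M-1}|\tau_{\ell+1}-\tau_\ell|$ and each increment satisfies $|\tau_{\ell+1}-\tau_\ell|\ge 1$ (because the $\tau_\ell$ are integers and $\tau_{\ell+1}\neq\tau_\ell$), we obtain $D/M\ge \tfrac12$, with equality if and only if $|\tau_{\ell+1}-\tau_\ell|=1$ for every $\ell$. The forward--backward sweep realizes only unit steps and is onto, so the minimum value $C=(w_\mu+w_Q)/2$ is attained, and the minimizers are exactly the unit-step sequences. This settles the first part of the statement for every admissible weight pair.

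For the minimal-period claim I would reinterpret a unit-step, onto, periodic sequence as a closed walk on the path graph with vertex set $\{0,1,\dots,T-1\}$ and edges $e_i=\{i,i+1\}$, $i=0,\dots,T-2$, visiting every vertex. The period $M$ equals the total number of edge traversals $\sum_i c_i$, where $c_i$ counts how often the walk crosses $e_i$. A cut argument gives the lower bound: because the walk is closed, each $c_i$ is even, and because all vertices are visited, each $e_i$ separates two nonempty visited sets, so it must be crossed in both directions, forcing $c_i\ge 2$. Summing over the $T-1$ edges yields $M\ge 2(T-1)$, a value achieved by the forward--backward sweep, which establishes minimality of its period.

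Finally, for uniqueness up to cyclic shift -- which I expect to be the main obstacle -- I would analyse the tight case $M=2(T-1)$, where every edge is crossed exactly twice. Decomposing $c_i=f_i+b_i$ into forward and backward crossings and using that a closed walk has zero net displacement across each cut, hence $f_i=b_i$, gives $f_i=b_i=1$: each edge is traversed exactly once in each direction. In particular the endpoints $0$ and $T-1$ are each visited exactly once, so after a cyclic shift placing $\tau_0=0$ the period splits into an outbound walk from $0$ to $T-1$ and a return walk from $T-1$ to $0$. For the outbound walk a cut at $e_i$ forces at least one forward crossing (net displacement $+1$ across the cut); since only one forward crossing of each edge is available in the whole period, the outbound walk uses exactly one forward and zero backward crossings of every edge and is therefore strictly increasing $0,1,\dots,T-1$, and symmetrically the return walk is strictly decreasing. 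This pins down the sequence as the forward--backward sweep. The delicate bookkeeping of directed crossings in this last step, rather than any single hard inequality, is where the care is needed.
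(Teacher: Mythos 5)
Your proof is correct, and its first half is essentially the paper's own argument: you split the increments into total upward and downward displacement, use the telescoping/periodicity identity to equate them, and obtain $C=(w_\mu+w_Q)\,D/M$, so that minimization reduces to the weight-independent quantity $D/M$ (the paper writes this as $\tfrac{w_\mu+w_Q}{2}\cdot\tfrac{V}{M}$ with $V=2D$), and the bound $2D=\sum_\ell|\tau_{\ell+1}-\tau_\ell|\ge M$ with equality exactly for unit-step sequences matches the paper's characterization. Where you genuinely diverge is the second claim: the paper simply asserts that, among unit-step onto sequences, the forward--backward sweep is the unique minimal-period one, whereas you prove it via a closed-walk argument on the path graph --- parity of cut crossings gives $c_i\ge 2$ per edge and hence $M\ge 2(T-1)$, and in the tight case the count of one forward and one backward traversal per edge forces the endpoints to be visited once each and the outbound and return segments to be monotone. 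This cut-based bookkeeping is a real addition: it supplies the justification the paper omits, and it would generalize directly to update schedules on more general adjacency structures, at the cost of some length the paper evidently chose to avoid.
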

\begin{proof}
    Consider one period of the periodic sequence as $\tau_0,...,\tau_{M-1},\tau_M = \tau_0$, with the cost defined as in \eqref{eq:periodic_cost_C}.
    Let $S_+ := \sum_{\ell=0}^{M-1} \max(\tau_{\ell+1}-\tau_\ell,0)$ and $S_- := \sum_{\ell=0}^{M-1} \max(0,\tau_{\ell}-\tau_{\ell+1})$. By periodicity $\sum_{\ell=0}^{M-1} (\tau_{\ell+1} - \tau_\ell) = 0$, hence $S_+ = S_-$. Moreover, let $V := \sum_{\ell=0}^{M-1} | \tau_{\ell+1} - \tau_\ell | = S_+ + S_- = 2 S_+$. Then
    \begin{equation*}
        C = \frac{w_\mu S_+ + w_Q S_-}{M} = \frac{w_\mu + w_Q}{2} \cdot \frac{V}{M}
    \end{equation*}
    Thus, since $w_\mu + w_Q$,  minimizing $C$ is equivalent to minimizing the average variation $V/M$. Sequences that minimize the average variation are characterized by $|\tau_{\ell+1} - \tau_\ell| = 1$, leading to $V/M = 1$. In fact, if there exists an $\ell$ such that $|\tau_{\ell+1} - \tau_\ell| > 1$, then $V/M > 1$.
     Since the periodic sequence must be onto, the unique minimizer with minimal period $M$, is the monotone forward and backward sweep (and its cyclic shifts). 
\end{proof}


The specified optimal sequence with minimal period alternates between forward and backward sweeps. After each improvement step, only a single instance of either $\mu_t(s)$ or $Q^{\pi}_t(s,a)$ has to be recomputed. This sequence leads to two policy improvements per stage over a period. An optimal sequence with a larger period would have a different number of policy improvement for different stages, and thus favour some stages. We therefore pick the one with the smallest period.   This most efficient version of the general class of memoryless policy improvement algorithms provided in the previous section is summarized in Algorithm~\ref{alg:structured} and illustrated in Figure~\ref{fig:efficient_PI}.

\begin{figure}[b!]
    \centering
    \includegraphics[width=0.9\linewidth]{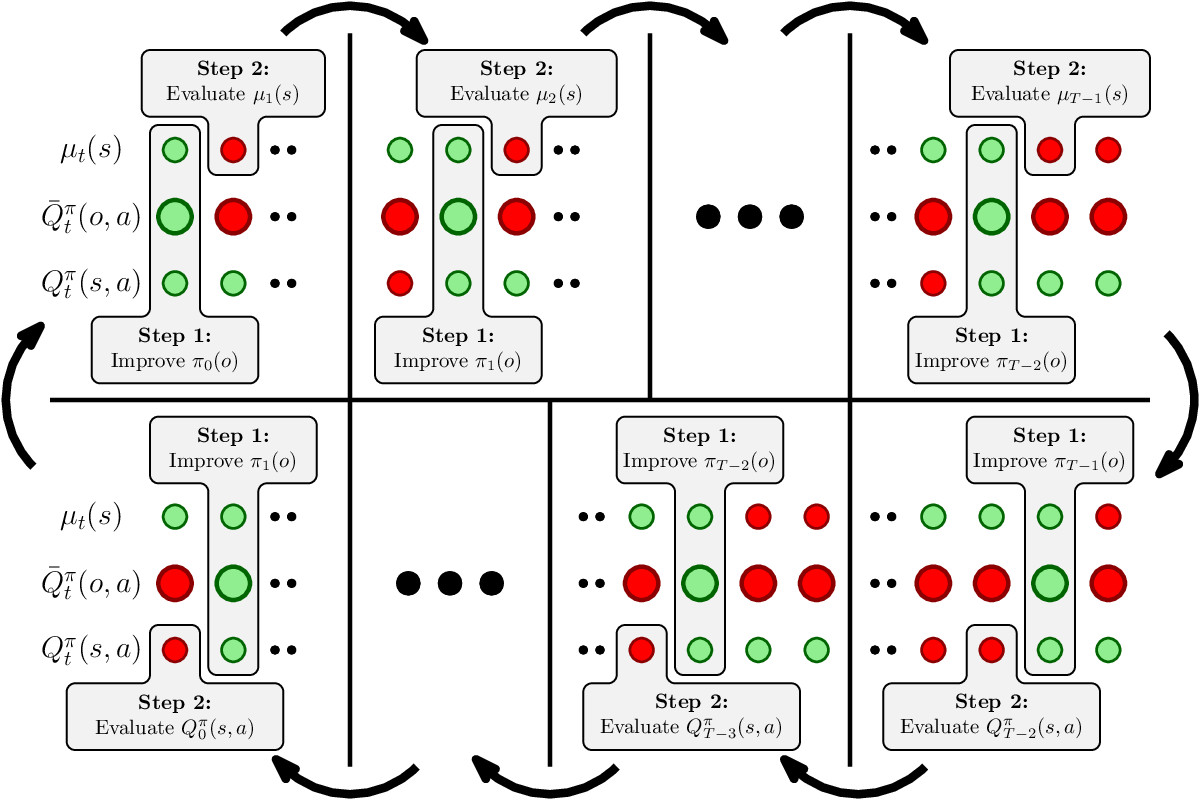}
    \caption{Illustration of Algorithm~\ref{alg:structured}, which alternates between a forward sweep (upper row), where after each improvement step $\mu_t(s)$ is recomputed, and a backward sweep (bottom row), where after each improvement step $Q^\pi_t(s,a)$ is recomputed.}
    \label{fig:efficient_PI}
\end{figure}

Consider again the forward and backward sweep at the end of Section~\ref{sec:III-A}. The cost $C$ for those two sequences with period $M=T$ is
\begin{equation*}
    C = \frac{w_\mu + w_Q}{2} \cdot \frac{2(T-1)}{T} = \frac{(T-1)(w_\mu + w_Q)}{T} \geq C^\ast
\end{equation*}
for $T \geq 2$. Any sequence that has jumps, such that $|\tau_{\ell+1} - \tau_\ell| \geq 1$, has a higher average variation than 1, and thus a larger cost. 


\begin{algorithm}[ht]
\caption{Efficient memoryless policy iteration for POMDPs}\label{alg:structured}
\KwIn{Initial policy $\pi^0$, $\mu_0$, $V_T$, $T$}

Use initial policy $\pi^0$ to compute $Q^{\pi^0}_t(s,a)$ with \eqref{eq:Q_eval_obs_based} for $t \in \{0,\dots,T-1\}$\;

\BlankLine
$\ell \gets 0$\;
{\tt{policystable} $\gets$ \tt{false}}\;
\BlankLine

\While{\tt{policystable = false}}{

    \BlankLine
    \emph{Forward sweep}
  
    \For{$t=0$ \KwTo $T-2$}{
        Compute $\bar{Q}^{\pi}_t(o,a)$ with \eqref{eq:Q_function_obs_based} and $\mu_{t}$\;
        $\pi_t^{\ell+1}(o) \gets \argmax_a \bar{Q}^{\pi}_t(o,a)$\;
        Compute $\mu_{t+1}$ with \eqref{eq:mu} using $\pi^{\ell+1}_t$ and $\mu_{t}$\;
        $\pi^{\ell+1} \gets \lbrace \pi_0^\ell,...,\pi_{t-1}^\ell,\pi_t^{\ell+1},\pi_{t+1}^\ell,\pi_{T-1}^\ell \rbrace$\;
        $\ell \gets \ell+1$\;
    }

    \BlankLine
    \emph{Backward sweep}
    
    \For{$t=T-1$ \KwTo $1$}{
        Compute $\bar{Q}^{\pi}_t(o,a)$ with \eqref{eq:Q_function_obs_based} and $\mu_{t}$\;
        $\pi^{\ell+1}_t(o) \gets \argmax_a \bar{Q}^{\pi}_t(o,a)$\;
        Compute $Q^{\pi^{\ell+1}}_{t-1}(s,a)$ with \eqref{eq:Q_eval_obs_based} using $\pi^{\ell+1}_t$ and $Q^{\pi^\ell}_{t}(s,a)$\;
        $\pi^{\ell+1} \gets \lbrace \pi_0^\ell,...,\pi_{t-1}^\ell,\pi_t^{\ell+1},\pi_{t+1}^\ell,\pi_{T-1}^\ell \rbrace$\;
        $\ell \gets \ell+1$\;
    }

    \BlankLine
    \If{$L^{\pi^\ell} = L^{\pi^{\ell-M}}$}{
        {\tt{policystable} $\gets$ \tt{true}}\;
    }
}
  \BlankLine
\KwOut{$\pi^{\ell}$}

\end{algorithm}

\clearpage
\section{Model-free Policy Iteration}\label{sec:IV}
 Inspired by the findings of the previous section, we propose in this section two algorithms that learn a memoryless policy from data under two different assumptions: the first algorithm assumes that data contains the state, whereas the second algorithm assumes that it only contains observations. As a consequence, the quantities that can be directly estimated in the state-informed method are $\alpha_t(s|o)$, $Q^{\pi}_t(s,a)$, and $q_t(o|s)$. For the observation-only method, it is not possible to directly trace the stationary distribution $\mu_t(s)$, and thus estimate $\alpha_t(s|o)$ from data. The only quantity that can be estimated is $\bar{Q}^{\pi}_t(o,a)$. Table~\ref{tab:supervised_unsupervised_summary} summarizes the assumptions, estimated quantities and consequences.
\par The state-informed setting is realistic in scenarios where training is performed in simulation or a high-fidelity digital twin, while deployment is restricted to partial observations. This is common in robotics and autonomous systems, where full simulator state (e.g., position, velocity, or contact forces) is available during learning but not at execution time. In such cases, the proposed method naturally fits a sim-to-real workflow. The output-informed setting appears in settings where the training data is the same as deployment data and contains only the output.
\par The state-informed method is discussed in Section~\ref{sec:IV-A}, whereas the observation-only method is presented in Section~\ref{sec:IV-B}. 

\begin{table}[b]
    \centering
    \caption{Summary of state-informed and observation-only model-free policy iteration.}
    \small
    \begin{tabular}{l m{3.4cm} m{4.1cm} m{3.6cm}}
        \textbf{Setting} & \textbf{Available Data} & \textbf{What can be estimated} & \textbf{Consequence} \\ \hline
        \textbf{State-informed} & States, observations, actions, rewards & $\alpha_t(s|o)$, $Q^{\pi}_t(s,a)$, $q_t(o|s)$ & Requires access to the state \\
        \textbf{Observation-only} & Observations, actions, rewards & $\bar{Q}^{\pi}_t(o,a)$ & Need to recollect data after each improvement
    \end{tabular}
    \label{tab:supervised_unsupervised_summary}
\end{table}

\subsection{State-informed model-free policy iteration}\label{sec:IV-A}
In line with the model-based memoryless policy iteration scheme in Section~\ref{sec:III-A}, we propose a model-free variant of this algorithm, again parametrized by an arbitrary periodic sequence $\tau_\ell \in \mathcal{T}$, $\ell \in \{0,1,\dots,\}$ with arbitrary period $M$. For now, we consider a data set that contains state information, such that $Q_t^{\pi}(s,a)$ and $\alpha_t(s|o)$ can be estimated directly.

\par While to obtain $\alpha_t(s|o)$ from $\mu_t(s)$ for a given policy $\pi^{\ell}$ it is desirable to have on-policy data, to obtain $Q_t^{\pi^\ell}(s,a)$ we need off-policy data.  Thus both quantities are estimated based on off-policy data and restricing the relevant on-policy data for computing $\pi^{\ell}$. We consider data sets $\mathcal{D}^\ell$, where the initial states are distributed according to the state distribution $\mu_0(s)$, and a  behaviour policy $b_t(a|o)$ (e.g. an epsilon-soft version of policy $\pi^\ell$) is applied afterwards. The data obtained with the behaviour policy is assumed to contain visits to all the state and control pairs a sufficient amount of times to obtain close predictions of  $Q_t^{\pi^\ell}(s,a)$.  Let these data sets be denoted by $\mathcal{D}^\ell := \lbrace \mathcal{D}^\ell_1,\mathcal{D}^\ell_2,...,\mathcal{D}^\ell_T \rbrace$. For each time $t$, there are $n_s$ data pairs:
\begin{equation*}
    \mathcal{D}^\ell_t := \left\lbrace (S_{t,i},O_{t,i},A_{t,i},R_{t,i}, S^\prime_{t,i}, O^\prime_{t,i} | i = 1,2,...,n_s \right\rbrace.
\end{equation*}
where $ S_{t,i}^\prime=S_{t+1,i}$, $ O_{t,i}^\prime=O^\prime_{t+1,i}$. This data set is sufficient for estimating $Q_t^{\pi^\ell}(s,a)$ and $\alpha_t(s|o)$. Before details on the estimation of these quantities are given, we define several variables that indicate the number of data samples in a data set:
\begin{align*}
    N_t(s) &= \sum_{i=1}^{n_s} \mathbbm{1}\lbrace S_{t,i}=s \rbrace &\quad
    N_t(s,o) &= \sum_{i=1}^{n_s} \mathbbm{1} \lbrace S_{t,i}=s, O_{t,i}=o\rbrace \\
    N_t(o) &= \sum_{i=1}^{n_s} \mathbbm{1}\lbrace O_{t,i}=o \rbrace &\quad
    N_t(s,a) &= \sum_{i=1}^{n_s} \mathbbm{1}\lbrace S_{t,i}=s,A_{t,i}=a \rbrace
\end{align*}
With these definitions, the observation model $q_t(o|s)$ can be estimated as $\hat{q}_t(o|s) = \frac{N_t(s,o)}{N_t(s)}$. Using bootstrapping techniques, $Q_t^{\pi}(s,a)$ can be estimated recursively, similar to~\eqref{eq:Q_eval_obs_based}, as
\begin{equation}\label{eq:Qhat}
    \hat{Q}^{\pi}_t(s,a) = \frac{1}{N_t(s,a)} \sum_{i=1}^{n_s} \left( \mathbbm{1}\lbrace S_{t,i}=s,A_{t,i}=a \rbrace \left( R_{t,i} + \hat{V}_{t+1}(S^\prime_{t,i}) \right)\right).
\end{equation}
where $\hat{V}_t(s) = \sum_o \hat{q}_t(o|s) \hat{Q}^\pi_t(s,\pi_t(o))$. 

In addition, $\alpha_t(s|o)$ can be estimated directly using importance sampling in a recursive manner, closely related to~\eqref{eq:mu}.
For the first stage, $\alpha_0(s|o)$ can be estimated as
\begin{equation}\label{eq:alpha0_hat}
    \hat{\alpha}_0(s|o) = \frac{N_0(s,o)}{N_0(o)}
\end{equation}
under the assumption that the data in the initial stage is distributed according to $\mu_0(s)$. 
The estimated corrected state observation counter $\tilde{N}_{t}(s,o)$ can be computed recursively as
\begin{equation*}
    \tilde{N}_{t+1}(s^\prime,o^\prime) = \frac{\sum_{i=1}^{n_s} \mathbbm{1}\lbrace A_{t,i} = \pi_t(O_{t,i}), S_{t,i}^\prime = s^\prime, O_{t,i}^\prime=o^\prime \rbrace \tilde{N}_{t}(S_{t,i},O_{t,i}) }{\sum_{i=1}^{n_s} \mathbbm{1}\lbrace A_{t,i} = \pi_t(O_{t,i})\rbrace}
\end{equation*}
where $\tilde{N}_{0}(s,o) = N_0(s,o)$. Based on these quantities, the posterior distribution $\alpha_t(s|o)$ can be estimated as
\begin{equation}\label{eq:alpha_hat}
    \hat{\alpha}_t(s|o) = \frac{\tilde{N}_{t}(s,o)}{\sum_s \tilde{N}_{t}(s,o)}.
\end{equation}

Combining the estimates of $Q^\pi_t(s,a)$ and $\alpha_{t}(s|o)$, the observation-action value can be computed according to
\begin{equation}\label{eq:Qhat_obs}
    \bar{Q}^{\pi}_t(o,a) = \sum_s \hat{Q}^{\pi}_t(s,a) \hat{\alpha}_{t}(s|o).
\end{equation}

These estimates allow us to apply the following state-informed model-free policy iteration scheme, which parallels the model-based method as proposed in Section~\ref{sec:III-A}.
\begin{enumerate}
  \item[] \hspace{-0.8cm} Given an initial deterministic policy $\pi^0$, and an $M$-periodic schedule $\lbrace \tau_\ell \rbrace$, set $\ell=0$.
  \item[0.]  \textbf{Collect data $\mathcal{D}^\ell$} Collect state, output and input trajectories using an $\epsilon$ soft version of policy $\pi^\ell$.
    \item[1.] \textbf{Policy evaluation:} Estimate $Q^{\pi^\ell}_t(s,a)$ and $\alpha_{t}(s|o)$ to compute $\bar{Q}^{\pi}_t(o,a)$ for $t =\tau_\ell$.
    \item[2.] \textbf{Policy improvement:} Compute
    \begin{equation*}
    {\pi}^{\ell+1}_t(o) = \argmax_a \bar{Q}_t^{\pi}(o,a) \text{ for } t=\tau_\ell.
    \end{equation*}
and set ${\pi}^{\ell+1}_t(o)={\pi}^{\ell}_t(o)$ for $t\neq \tau_\ell$. Set $\ell \rightarrow \ell+1$ and repeat $0$, $1$ and $2$ until the policy does not change after $M$ consecutive policy improvement steps.
\end{enumerate}

\par Recall from Section~\ref{sec:III-A} that there are several sequences by which the policy improvement steps can be executed in a policy iteration method, all leading to (possibly different) \textit{local} optima. An algorithm with the same efficient sequence of the model-based case alternating between forward and backward sweeps, is summarized in Algorithm~\ref{alg:model-free}. In the forward sweep, only $\alpha_t(s|o)$ is estimated using importance sampling, while $Q^\pi_t(s,a)$ remains unchanged. In the backward sweep $Q^\pi_t(s,a)$ is estimated while $\alpha_t(s|o)$ remains unchanged.

\begin{algorithm}[ht]
\caption{State-informed model-free policy iteration for POMDPs}\label{alg:model-free}
\KwIn{$\mathcal{D}^\ell$, $\pi^0$, $N_{\text{iter}}$}

Use $\pi^{0}$ to compute $\hat{Q}^{\pi^0}_t(s,a)$ with \eqref{eq:Qhat} for $t \in \{0,\dots,T-1\}$\;
Compute $\hat{\alpha}_0(s|o)$ with \eqref{eq:alpha0_hat}\;
Compute observation model $\hat{q}_t(o|s)$\;

\BlankLine
$\ell \gets 0$\;
\BlankLine

\For{$i=1$ \KwTo $N_{\mathrm{iter}}$}{

    \BlankLine
    \emph{Forward sweep}
  
    \For{$t=0$ \KwTo $T-2$}{
        Compute $\bar{Q}^\pi_t(o,a)$ with \eqref{eq:Qhat_obs}, using $\hat{Q}^\pi_t(s,a)$ and $\hat{\alpha}_t(s|o)$\;
        $\pi_t^{\ell+1}(o) \gets \argmax_a \bar{Q}^\pi_t(o,a)$\;
        Compute $\hat{\alpha}_{t+1}(s|o)$ with \eqref{eq:alpha_hat} using $\pi^{\ell+1}_t$ and $\hat{\alpha}_{t}(s|o)$\;
        $\pi^{\ell+1} \gets \lbrace \pi_0^\ell,...,\pi_{t-1}^\ell,\pi_t^{\ell+1},\pi_{t+1}^\ell,\pi_{T-1}^\ell \rbrace$\;
        $\ell \gets \ell+1$\;
    }

    \BlankLine
    \emph{Backward sweep}
    
    \For{$t=T-1$ \KwTo $1$}{
        Compute $\bar{Q}^\pi_t(o,a)$ with \eqref{eq:Qhat_obs}, using $\hat{Q}^\pi_t(s,a)$ and $\hat{\alpha}_t(s|o)$\;
        $\pi^{\ell+1}_t(o) \gets \argmax_a \bar{Q}^\pi_t(o,a)$\;
        Compute $\hat{Q}^{\pi^{\ell+1}}_{t-1}(s,a)$ with \eqref{eq:Qhat} using $\pi^{\ell+1}_t$ and $\hat{Q}^{\pi^\ell}_{t}(s,a)$\;
        $\pi^{\ell+1} \gets \lbrace \pi_0^\ell,...,\pi_{t-1}^\ell,\pi_t^{\ell+1},\pi_{t+1}^\ell,\pi_{T-1}^\ell \rbrace$\;
        $\ell \gets \ell+1$\;
    }
}
  \BlankLine
\KwOut{$\pi^{\ell}$}

\end{algorithm}

\subsection{Observation-only policy iteration}\label{sec:IV-B}
In the case only observations (and not state) information can be collected, the posterior distribution and the state-action value cannot be separated from each other. This however is a key feature to obtain an efficient policy iteration scheme. To still rely on the principles of monotonic convergence, developed in Section~\ref{sec:III}, the best one can do is to make only improvements at a single stage. The data sets used for observation-only model-free policy iteration are defined as
\begin{equation*}
    \bar{\mathcal{D}}^\ell_t := \left\lbrace (o_{t,i},a_{t,i},r_{t,i},o^\prime_{t,i})|i=1,2,...,n_s \right\rbrace.
\end{equation*}
This data set is generated by applying exploring actions at the stage considered for improvement, and on-policy actions of policy $\pi^\ell$ elsewhere. After the single-stage improvement, a complete new data set needs to be generated.
In this method, the observation-action values are estimated directly using Monte Carlo returns:
\begin{equation}\label{eq:Qhat_obs_only}
    \bar{Q}^\pi_t(o,a) = \frac{1}{N_t(o,a)} \sum_{i=1}^{n_s} \left( \mathbbm{1}\lbrace O_{t,i}=o,A_{t,i}=a \rbrace \sum_{k=t}^T R_{k,i} \right).
\end{equation}
where $N_t(o,a) = \sum_{i=1}^{n_s} \mathbbm{1}\lbrace O_{t,i}=o,A_{t,i}=a \rbrace$. 
The observation-only model-free policy iteration 
algorithm is omitted for the sake of brevity.






\section{Numerical Examples}\label{sec:V}
This section empirically evaluates the proposed memoryless policy iteration algorithms. We focus on three topics aligned with Sections~\ref{sec:III}~and~\ref{sec:IV}: (i) monotonic convergence behaviour, (ii) computational efficiency compared to existing methods, and (iii) performance in model-free settings. In Section~\ref{sec:V-A}, the proposed model-based algorithm of Section~\ref{sec:III} is compared with model-based policy gradient and exhaustive search baselines. We also provide a runtime comparison with recently introduced methods. In Section~\ref{sec:V-B}, the proposed model-free methods are analysed.

\subsection{Model-based policy iteration}\label{sec:V-A}
Random POMDPs are considered, which is common practice in the literature for these problems and allows us to easily scale the problem (see e.g. ~\cite{littman1995efficient}). The first example is a small-size environment, where $|\mathcal{S}| = 20$, $|\mathcal{A}| = 2$, $|\mathcal{O}| = 4$, and $T = 6$. The second example is a mid-size environment, where $|\mathcal{S}| = 40$, $|\mathcal{A}| = 10$, $|\mathcal{O}| = 20$, and $T = 20$. A larger environment with $T=50$, $|\mathcal{S}|=500$, and $|\mathcal{O}|=|\mathcal{A}|=100$ is considered later. The expected rewards $r_t(s,a)$ are also randomized. 

For both examples, we compare the proposed model-based algorithm with a model-based policy gradient method. For the policy gradient method, a stochastic soft-max policy is used, defined as
\begin{equation*}
    \pi_t(a|o,\theta) = \frac{e^{\theta_{o,a,t}}}{\sum_b e^{\theta_{o,b,t}}}
\end{equation*}
parametrized by $\theta_{o,a,t}$, $o \in \mathcal{O}$, $a \in \mathcal{A}$, $t \in \mathcal{T}$. The objective function which is maximized is $L^{\pi}$. Through the dependency of $L^\pi$ on parameters $\theta$, we are able to compute the gradient $\nabla_\theta L^{\pi}(\theta)$. To compute the gradient we need to sweep over stages as all values of $\theta$ affect $L^\pi$. A backtracking line search method is used to ensure monotonic convergence. For the line search we used the Armijo rule which entails evaluating $L^\pi$ a few times at each gradient step.  Given the relative low problem dimensions, computing the global optimal deterministic memoryless policy is still possible via exhaustive search. The number of possible policies is $|\mathcal{A}|^{|\mathcal{O}|\cdot T}$. In fact, for the small-size environment, the number of possible policies is $2^{4\cdot6} = 16.777.216$. For the mid-size environment however, this number is $10^{400}$, which makes the global optimal policy impossible to compute.

In Figure~\ref{fig:MB_example_small} the speed of convergence of the proposed model-based policy iteration method is compared with the policy gradient method and the exhaustive search method. For this particular small-size example, all methods converge to the same solution, which happens to be the global optimal policy.  The model-based policy iteration method converges to the optimal policy in 11 policy stage improvements. That implies that within one forward and backward sweep (as proposed in Section~\ref{sec:III-B}), no further improvements can be found. We count a stage improvement as a change of the action selected at a particular time $t$.  The model-based policy gradient method improves $T$ stages simultaneously with each gradient step. Thus, we have considered that each gradient step amounts to $T$ stage improvement steps. Note that both the gradient method and our memoryless policy iteration need a sweep over states and stages to improve all the stages and thus such a plot is sensible. A comparison using computation times is provided in the sequel. From Figure~\ref{fig:MB_example_small} it is clear that the policy gradient method requires more stage improvements than the model-based policy iteration method until convergence.

\begin{figure}[b!]
    \centering
    \includegraphics[width=0.8\linewidth]{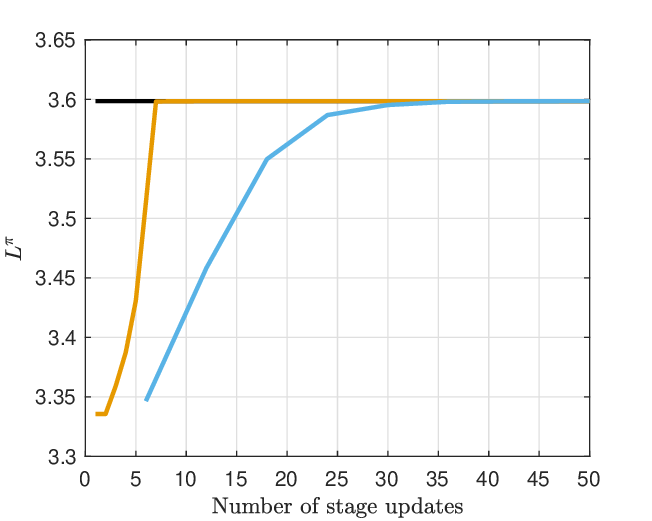}
    \caption{Random small-size POMDP, with $|\mathcal{S}| = 20$, $|\mathcal{A}| = 2$, $|\mathcal{O}| = 4$, and $T = 6$. Illustrated are the exhaustive search solution (\protect\blackline), the proposed model-based policy iteration method (\protect\orangeline), and model-based policy gradient (\protect\blueline).}
    \label{fig:MB_example_small}
\end{figure}

Figure~\ref{fig:MB_example_midsize} shows the evolution of the expected episodic return versus the number of stage improvements for a mid-size POMDP. When the size of the problem increases, the difference between the policy iteration and the policy gradient method becomes even more apparent. Again, both methods converge to the same locally optimal deterministic memoryless policy. The policy iteration method however found this deterministic policy within 45 unique stage improvements, whereas the policy gradient method, after 400 unique stage improvements (or 20 gradient steps) and it is only (very) close to this policy.

\begin{figure}[t!]
    \centering
    \includegraphics[width=0.8\linewidth]{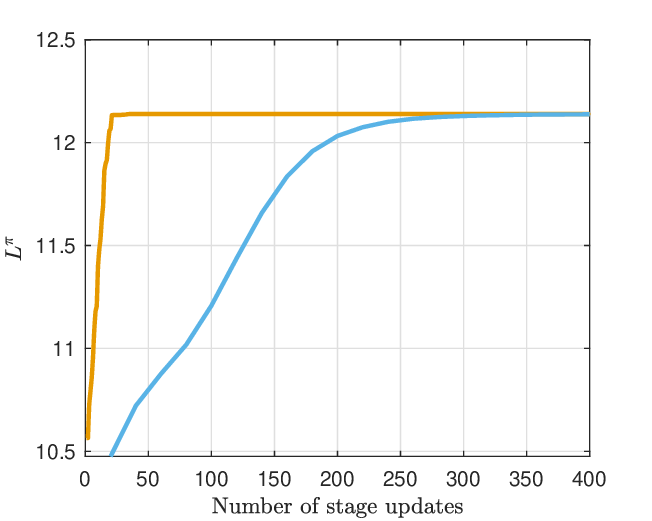}
    \caption{Random mid-size POMDP, with $|\mathcal{S}| = 40$, $|\mathcal{A}| = 10$, $|\mathcal{O}| = 20$, and $T = 20$. Illustrated are the proposed model-based policy iteration method (\protect\orangeline), and model-based policy gradient (\protect\blueline).}
    \label{fig:MB_example_midsize}
\end{figure}

We now compare the computation times of the two methods considered so far and also of the methods proposed in~\cite{cohen2023future},~\cite{MuellerMontufar2022}. For this we consider a POMDP, where $T=5$ and $|\mathcal{S}| = 20$. The number of observations coincide with the number actions and varied to compare computation times. All computations are done on the same computer.  For the Mixed Integer Linear Programming (MILP) method, presented in \cite{cohen2023future}, the Gurobi solver was used. The geometric method presented in \cite{MuellerMontufar2022} was solved using a standard sequential quadratic programming solver. All gradient-based methods were terminated when the improvements were below a certain threshold. The exhaustive search method and the MILP method are only computed for $|\mathcal{O}|=|\mathcal{A}| \in \lbrace 2,3 \rbrace$; larger values are too expensive to compute. The geometric method could be used without any problems up until $|\mathcal{O}|=|\mathcal{A}| = 16$.

\begin{figure}[tp]
    \centering
    \includegraphics[width=0.8\textwidth]{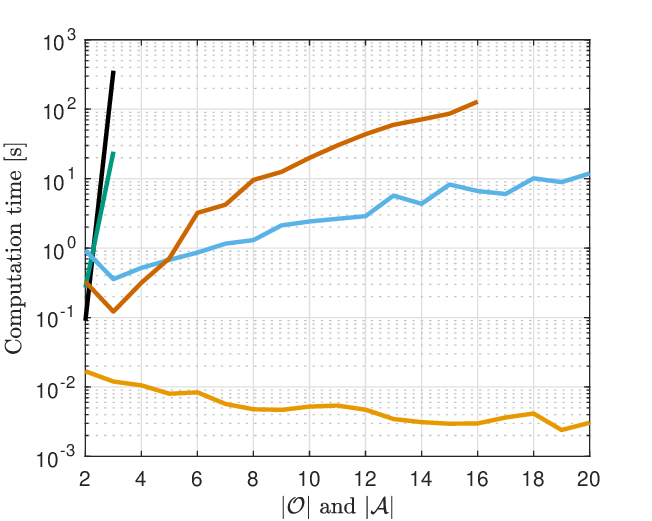}
    \caption{Comparison of computation time for the exhaustive search  method (\protect\blackline), the developed model-based policy iteration method (\protect\orangeline), a model-based policy gradient method (\protect\blueline), the mixed-integer linear programming method of \cite{cohen2023future} (\protect\greenline), and the geometric method of \cite{MuellerMontufar2022} (\protect\redline). $|\mathcal{O}|$ and $|\mathcal{A}|$ coincide, and are varied along the x-axis. Note the logarithmic scale on the y-axis.}
    \label{fig:Computation_time}
\end{figure}

The computation times are plotted in Figure~\ref{fig:Computation_time}. The policy iteration method significantly outperforms all the other methods.
Recall that both the geometric method, as well as the policy gradient method, are not guaranteed to output a deterministic policy, whereas the policy iteration method has that guarantee. The policy iteration method usually was able to find the local optimal policy within 2 or 3 forward + backward sweeps. To illustrate the optimality of the converged policy, consider the results in Table~\ref{tab:final_J_runtime}. The table clearly indicates that the policy found with the proposed policy iteration method are at least as good as the other methods for these examples. 

When the problem size increases, the policy iteration method remains superior to the policy gradient one. Consider e.g. a POMDP where $T=20$, $|\mathcal{S}|=200$, and $|\mathcal{O}|=|\mathcal{A}|=50$. The policy iteration method was able to solve it in $0.7$ seconds and achieve a local optimum where $L^\pi = 11.5880$. The policy gradient method on the other hand took $516$ seconds to achieve the same local optimum. 
Even for a large POMDP, where $T=50$, $|\mathcal{S}|=500$, and $|\mathcal{O}|=|\mathcal{A}|=100$, the policy iteration method found a local optimum within $15.06$ seconds.

\begin{table}[tp!]
    \centering
    \caption{$L^\pi$ after convergence to a (local) optimal policy.}
    \small
    \begin{tabular}{c|m{1.55cm} m{1.55cm} m{1.65cm} m{2.8cm} m{2.9cm} }
        $|\mathcal{O}|$ and $|\mathcal{A}|$ & \textbf{Policy iteration} & \textbf{Policy gradient} & \textbf{Exhaustive search} & \textbf{\cite{cohen2023future}} & \textbf{\cite{MuellerMontufar2022}}   \\ \hline
        2 & 3.2125 & 3.2125 & 3.2125 & 3.2125 & 3.2125 \\
        3 & 3.2140 & 3.2140 & 3.2140 & 3.1284 & 3.2133 \\
        4 & 3.3868 & 3.3868 & - & - & 3.3865 \\
        5 & 3.4920 & 3.4920 & - & - & 3.4866 \\
        6 & 3.6060 & 3.6059 & - & - & 3.6054 \\
        7 & 3.5749 & 3.5749 & - & - & 3.5745 \\
        8 & 3.6338 & 3.6338 & - & - & 3.6338 \\
        9 & 3.6392 & 3.6392 & - & - & 3.6364 \\
        10 & 3.6378 & 3.6378 & - & - & 3.6360 \\
    \end{tabular}
    \label{tab:final_J_runtime}
\end{table}

\subsection{Model-free}\label{sec:V-B}
Consider again a random POMDP, where $|\mathcal{S}| = 5$, $|\mathcal{A}| = 5$, $|\mathcal{O}| = 5$, and $T = 10$. Data is collected in batches of 5000 episodes per iteration. The two proposed methods in Section~\ref{sec:IV} are compared with LSPI~\cite{lagoudakis2003least} and policy gradient (REINFORCE algorithm~\cite{williams1992simple}). Except for the state-informed memoryless policy iteration method, all the methods used for comparison and the observation-based memoryless policy iteration only have access to observations. For the state-informed model-free policy iteration, a single forward and backward sweep is executed per iteration, whereafter new data is collected using an epsilon-soft policy, where $\epsilon = 0.5$. The LSPI method estimates the Q-function batch-wise using all data, whereas the policy gradient method makes a gradient step after each episode. The proposed model-free methods and LSPI give deterministic policies by definition, whereas for policy gradient the soft-max parametrization is used, leading to a stochastic policy.

The results are given in Figure~\ref{fig:model_free}, where the performance measure $L^{\pi}$ is averaged over 5 Monte Carlo simulations. The state-informed model-free policy iteration method leads to a memoryless, observation-based policy superior to the other methods. This result empirically supports the efficiency trade-offs discussed in Section~\ref{sec:IV}: separating estimation of $Q^\pi_t(s,a)$ and $\alpha_t(s|o)$ enables efficient multi-stage improvement from a single data set. The observation-only model-free policy iteration method takes approximately 10 iterations to come close to give comparable results, since then all stages have been improved at least once. For further iterations, it is rather consistent in finding an improved policy at each iteration. LSPI struggles to find a stable optimal policy in this partially observable setting, while policy gradient yields a comparably performing policy, albeit stochastic.

For the state-informed model-free policy iteration method, the data sets are generated with an epsilon-soft policy, where $\epsilon = 0.5$. By varying $\epsilon$, the data set can be completely on-policy ($\epsilon = 0$), fully exploratory ($\epsilon = 1$), or anything in between. To estimate $Q^\pi_t(s,a)$, it is preferred to have a fully exploratory policy, to cover all states and actions and get an accurate estimate for each pair. To estimate $\alpha_t(s|o)$, however, an on-policy data set is preferred, with more data samples distributed according the state distribution of the old policy. In Figure~\ref{fig:epsilon_vary}, the effect of $\epsilon$ on the convergence is shown. Neither the extreme of fully exploratory policy ($\epsilon = 1$), nor the almost fully on-policy ($\epsilon = 0.01$), are preferred. The best results are obtained with intermediate values of $\epsilon$.

\begin{figure}
    \centering
    \includegraphics[width=0.8\linewidth]{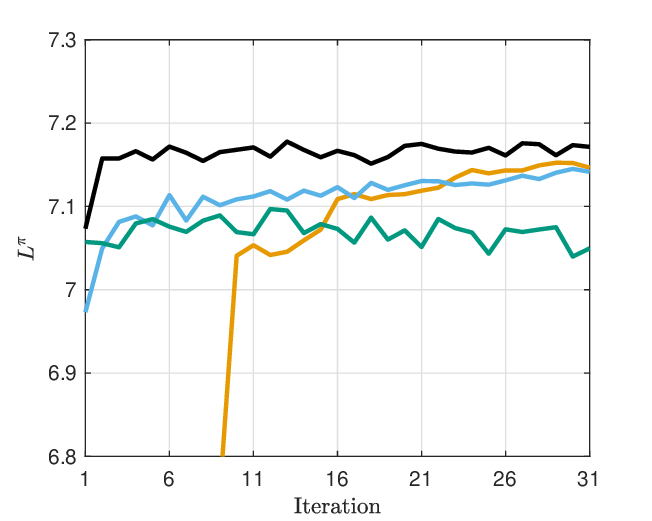}
    \caption{Comparison of state-informed model-free policy iteration (\protect\blackline), observation-only model-free policy iteration (\protect\orangeline), LSPI (\protect\greenline), and policy gradient (\protect\blueline). After each iteration, the exact expected episodic return function of the learned policy is computed. The results are averaged over 5 Monte Carlo simulations, with 5000 episodes per iteration.}
    \label{fig:model_free}
\end{figure}

\begin{figure}
    \centering
    \includegraphics[width=0.8\linewidth]{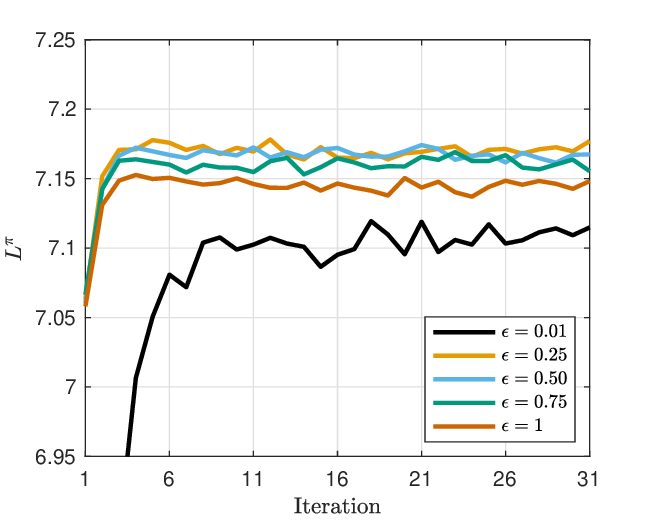}
    \caption{Effect of $\epsilon$ in epsilon-soft behaviour policy. The results are averaged over 20 Monte Carlo simulations, with 5.000 episodes per iteration.}
    \label{fig:epsilon_vary}
\end{figure}

\clearpage
\section{Conclusions and Future Work}\label{sec:VI}
This paper tackles the challenge of computing deterministic memoryless policies for episodic POMDPs from the perspective of policy iteration. We introduced a general class of memoryless policy iteration algorithms that perform single-stage output-based improvements according to a prescribed periodic schedule, and proved that any onto schedule yields monotonic improvement and convergence to a locally optimal memoryless policy. This establishes, for the episodic setting, a principled analogue of classical policy iteration that operates directly in the observation space despite the absence of the Markov property. Beyond monotonic convergence, we characterized the computational implications of different schedules and identified a unique (up to cyclic shift) schedule with minimal period that optimally balances forward and backward propagation of the state distribution and the state-action function.

The resulting algorithm retains the structure and interpretability of policy iteration while achieving substantial computational advantages over existing approaches, including policy gradient methods and recent specialized solvers for memoryless POMDP policies. Empirically, the proposed method consistently reaches local optima using much less computations.

Based on the findings on the model-based policy iteration method, two model-free policy iteration methods are proposed. The first one is state-informed model-free policy iteration, where, during learning, the state of the POMDP is part of the data set. This leads to a rather efficient policy iteration scheme, where a single off-policy data set can improve the policy at all stages. The second method is observation-only model-free policy iteration, where there is no access to the underlying state; only observations are in the data set. In this method, after each single-stage policy improvement step, a new on-policy data set has to be gathered. The proposed model-free policy iteration methods are compared with LSPI and policy gradient. The state-informed method significantly outperforms the other methods with the same data set size. The observation-only method achieves eventually a better performing policy than conventional methods, although the rate of convergence is rather slow and thus a large amount of data is required before a satisfying performance level is achieved.

Directions for future research include extensions to infinite horizon and continuous state, action and observation spaces. Furthermore, extensions to approximate policy iteration methods will allow handling much larger problems. 


\acks{The research is carried out as part of the ITEA4 20216 ASIMOV project. The ASIMOV activities are supported by the Netherlands Organisation for Applied Scientific Research TNO and the Dutch Ministry of Economic Affairs and Climate (project number: AI211006). The research leading to these results is partially funded by the German Federal Ministry of Education and Research (BMBF) within the project ASIMOV-D under grant agreement No. 01IS21022G [DLR], based on a decision of the German Bundestag.}


\clearpage
\vskip 0.2in
\bibliography{sample}

\newpage
\appendix
\section{Proof of Theorem~\ref{th:monotonic_convergence}}
We first note that $L^\pi$ can be written as:
\begin{align}
    L^{\pi} &= \mathbb{E}\left\lbrack \sum_{t=0}^{T-1} R_t + V_T(s_T) \right\rbrack \nonumber \\
    &= \sum_o \gamma_0(o) \sum_a \pi_0(a|o) \bar{Q}_0^{\pi}(o,a) \label{eq:L_def_2} \\
    &= \sum_s \mu_0(s) \sum_o q_0(o|s) \sum_a \pi_0(a|o) Q_0^{\pi}(s,a) \nonumber
\end{align}
If $\tau_{\ell+1} = 0$, it is obvious that
\begin{align*}
    L^{\pi^{\ell+1}} &= \sum_o \gamma_0(o) \sum_a \pi^{\ell+1}_0(a|o) \bar{Q}_0^{\pi^{\ell+1}}(o,a)  \\
    &\geq \sum_o \gamma_0(o) \sum_a \pi^\ell_0(a|o) \bar{Q}_0^{\pi^\ell}(o,a)  \\
    &= L^{\pi^{\ell}}
\end{align*}
since $\gamma_0(o)$ is independent of the policy applied at $t=0$ and $\bar{Q}_0^{\pi^{\ell+1}}(o,a) = \bar{Q}_0^{\pi^{\ell}}(o,a)$ only depends on the policy applied at $t>0$. 
If the policy is improved at $\tau_{\ell+1} \neq 0$, we can unroll \eqref{eq:L_def_2} up until $k=\tau_{\ell+1}$, such that we obtain:
\begin{align}
    L^{\pi^{\ell+1}} &= \sum_{i=0}^{k-1} \sum_s \mu_i(s) \sum_o q_i(o|s) \sum_a \pi^{\ell+1}_i(a|o) r_i(s,a) \nonumber \\
    &\quad\quad\quad + \sum_{s^\prime} \mu_k(s^\prime) \sum_{o^\prime} q_k(o^\prime|s^\prime) \sum_{a^\prime} \pi^{\ell+1}_{k}(a^\prime|o^\prime) \bar{Q}^{\pi^{\ell+1}}_{k}(s^\prime,a^\prime) \nonumber\\
    &= \sum_{i=0}^{k-1} \sum_s \mu_i(s) \sum_o q_i(o|s) \sum_a \pi^{\ell+1}_i(a|o) r_i(s,a) + \sum_o \gamma_k(o) \sum_a \pi^{\ell+1}_k(a|o) \bar{Q}^{\pi^{\ell+1}}_k(o,a) \nonumber\\
    &= \sum_{i=0}^{k-1} \sum_s \mu_i(s) \sum_o q_i(o|s) \sum_a \pi^{\ell}_i(a|o) r_i(s,a) + \sum_o \gamma_k(o) \sum_a \pi^{\ell+1}_k(a|o) \bar{Q}^{\pi^{\ell+1}}_k(o,a) \nonumber\\
    &\geq \sum_{i=0}^{k-1} \sum_s \mu_i(s) \sum_o q_i(o|s) \sum_a \pi^\ell_i(a|o) r_i(s,a) + \sum_o \gamma_k(o) \sum_a \pi^\ell_k(a|o) \bar{Q}^{\pi^\ell}_k(o,a) \label{eq:th_inequality}\\
    &= L^{\pi^{\ell}}. \nonumber
\end{align}
This follows from the fact that the policy remains unchanged for $t < \tau_{\ell+1}$: $\pi^{\ell+1}_i = \pi^{\ell}_i$ for $i \in \lbrace 0,...,\tau_{\ell+1}-1\rbrace$. Furthermore, $\bar{Q}_k^{\pi^{\ell+1}}(o,a) = \bar{Q}_k^{\pi^{\ell}}(o,a)$, since these values are independent on the policy at $k = \tau_{\ell+1}$. The inequality in \eqref{eq:th_inequality} follows directly from the optimization problem. 
The last equality in \eqref{eq:th_inequality} only holds when the values of $\bar{Q}_k^{\pi^\ell}(o,a)$ are aligned with (old) policy $\pi^\ell$, which the class of algorithms ensures each improvement step. Therefore each improvement step in the class of algorithms monotonically increases the expected episodic return. 
    
To continue the proof of convergence to a local optimal policy, let the expected episodic return after the $j$'th improvement be denoted by $L^{\pi^{(j)}}$. Then,
\begin{equation*}
    L^{\pi^{(0)}} \leq  L^{\pi^{(1)}} \leq ... \leq L^{\pi^\ast}
\end{equation*}
This sequence is bounded from above by the global optimal policy $\pi^\ast$. Let the period of $\lbrace \tau_\ell \rbrace$ be given as $M$. Since the number of policies is finite and the cost is upper and lower bounded and decreases monotonically, the cost must converge to a limit value. In particular, there exists  $m$ such that
\begin{equation*}
    L^{\pi^{(m)}} = L^{\pi^{(m+1)}} = ... = L^{\pi^{(m+M)}}
\end{equation*}
for the given $M$. Since the sequence $\{\tau_\ell\}$ is onto, no single stage can be improved further, and thus a local optimal policy has been found.

\end{document}